\documentclass{article}

\usepackage[preprint]{neurips_2026}

\usepackage[utf8]{inputenc}
\usepackage[T1]{fontenc}
\usepackage{hyperref}
\usepackage{url}
\usepackage{booktabs}
\usepackage{amsfonts}
\usepackage{amsmath}
\usepackage{amssymb}
\usepackage{amsthm}
\usepackage{thm-restate}
\usepackage{nicefrac}
\usepackage{microtype}
\usepackage{xcolor}
\usepackage{tcolorbox}
\usepackage{graphicx}
\usepackage{pgfplots}
\pgfplotsset{compat=1.17}

\newtheorem{theorem}{Theorem}[section]

\newtheorem{lemma}[theorem]{Lemma}

\title{Oversmoothing as Representation Degeneracy in Neural Sheaf Diffusion}

\author{%
Arif D\"onmez$^{1,2,*}$ \quad Axel Mosig$^{2,3}$ \quad Ellen Fritsche$^{1,2,4}$ \quad Katharina Koch$^{1,2}$\\
$^1$ IUF -- Leibniz Research Institute for Environmental Medicine, D\"usseldorf, Germany\\
$^2$ DNTOX GmbH, D\"usseldorf, Germany\\
$^3$ Bioinformatics Group, Ruhr University Bochum, Bochum, Germany\\
$^4$ Swiss Centre for Applied Human Toxicology (SCAHT), Basel, Switzerland\\
$^*$ Corresponding author: \texttt{arif.doenmez@ruhr-uni-bochum.de}
}

\begin{document}

\maketitle

\begin{abstract}
Neural Sheaf Diffusion (NSD) generalizes diffusion-based Graph Neural Networks by replacing scalar graph Laplacians with sheaf Laplacians whose learned restriction maps define a task-adapted geometry. While the diffusion limit of NSD is known to be the space of global sections, the representation-theoretic structure of this harmonic space remains largely implicit. In this paper, we develop a quiver-theoretic interpretation of NSD by identifying cellular sheaves on graphs with representations of the associated incidence quiver. Under this correspondence, learned sheaf geometries become points in a finite-dimensional representation space. We prove that direct-sum decompositions of the underlying incidence-quiver representation induce corresponding decompositions of the harmonic space reached in the diffusion limit. This provides an algebraic interpretation of oversmoothing as representation degeneration: a conceptual framing where learned sheaves collapse toward trivial or low-complexity summands whose global sections fail to preserve discriminative information. Building on this viewpoint, we connect sheaf diffusion to stability, moduli, and moment-map principles from Geometric Invariant Theory. We introduce moment-map-inspired regularizers that bias learned restriction maps toward more balanced representation geometries, and we identify a structural obstruction in standard equal-stalk architectures: when $d_v=d_e$, the admissibility condition for learnable stability parameters forces the trivial all-object summand onto a stability wall. We show that non-uniform stalk dimensions remove this obstruction, making adaptive stability meaningful in principle. Empirical evaluations on heterophilic benchmarks are consistent with this mechanism: breaking stalk symmetry can reduce variance or improve validation behavior on some datasets, and adaptive stability regularization becomes more effective in selected rectangular settings. These results support the view that moment-map regularization is a structured but dataset-dependent geometric bias rather than a universal performance booster. Overall, our framework interprets oversmoothing not only as a spectral pathology, but as a degeneration phenomenon in the underlying representation geometry.
\end{abstract}

\section{Introduction}
\label{sec:intro}

Many Graph Neural Networks (GNNs) can be understood as discrete diffusion processes on graphs \citep{kipf2017semi, velickovic2018graph}. In message-passing architectures, nodes repeatedly aggregate information from their neighbors, effectively implementing a form of Laplacian smoothing. While this diffusion perspective explains the success of GNNs in propagating local information, it also exposes a fundamental limitation: repeated smoothing drives node representations toward indistinguishable limits. This phenomenon, known as oversmoothing, is especially problematic in deep architectures where node features collapse to low-dimensional or nearly constant subspaces \citep{li2018deeper, chen2020measuring}. In the classical graph Laplacian setting, this collapse is a structural consequence of the underlying diffusion geometry: the heat equation asymptotically projects features onto the kernel of the Laplacian, which for standard connected graphs consists solely of the uninformative constant signal. However, this classical assumption---that connected nodes should become identical---is severely mismatched for heterophilic graphs, motivating the search for richer geometric structures that avoid collapsing into overly simple harmonic states.

Neural Sheaf Diffusion (NSD) addresses this by lifting the diffusion process into a vector bundle-like structure, replacing the standard Laplacian with a sheaf Laplacian whose restriction maps are learned from the data \citep{hansen2019toward, bodnar2022neural}. By learning a task-adapted geometry, NSD allows the network to natively model heterophily, as the diffusion limit (the space of global sections) can encode complex, signed, and multi-dimensional feature agreements rather than simple constants. However, optimizing these highly parameterized general sheaves is notoriously fragile \citep{barbero2022sheafattention, barbero2022connection}. Furthermore, unconstrained gradient descent inherently exhibits an implicit bias \citep{rahaman2019spectral, shah2020pitfalls, arora2019implicit}. Left unregularized, the optimizer may implicitly converge to trivial, identity-like restriction maps, collapsing the expressive capacity of the sheaf back into classical oversmoothing. 

In this work, we formalize this failure mode by interpreting the learned sheaf geometry through the lens of quiver representation theory \citep{schiffler2014quiver, derksen2005quiver}. By identifying cellular sheaves on graphs as representations of an associated incidence quiver, we map the continuous parameters of the neural network directly into a finite-dimensional representation space. This allows us to provide a formal conceptual interpretation of asymptotic oversmoothing in NSD not merely as a spectral inevitability, but as a homological degeneration: it corresponds to the optimizer converging toward the basin of attraction of the trivial subrepresentation, $\mathcal{F}_{\mathrm{triv}}$.

To mitigate this representation collapse, we invoke Geometric Invariant Theory (GIT) \citep{king1994moduli, mumford1994geometric}. We use polystability as an algebraic model of nondegenerate sheaf geometry: representations that avoid certain destabilizing subobjects provide a principled target for regularizing learned sheaves. We introduce stability-aware moment-map regularizers that bias the learned restriction maps toward well-conditioned, expressive geometries. Crucially, our algebraic analysis uncovers a structural obstruction in standard sheaf architectures: when vertex and edge stalks have equal dimensions ($d_v = d_e$), the admissibility constraints of GIT force the trivial subrepresentation onto a stability wall. Consequently, learnable adaptive stabilization cannot strictly exclude this collapse mode. 

By identifying and breaking this architectural symmetry through rectangular stalks, we remove the obstruction that prevents adaptive stability parameters from assigning nonzero weight to the trivial all-object summand. Empirically, our results suggest that this symmetry breaking changes the optimization geometry of General-NSD: rectangular architectures improve variance or validation behavior on some WebKB datasets, and adaptive moment-map regularization becomes beneficial in settings where it was ineffective in the equal-stalk regime. These findings support the representation-geometric view that oversmoothing is sensitive to the decomposition structure of the learned sheaf representation, while also showing that moment-map regularization is a structured but dataset-dependent inductive bias rather than a universal performance booster.

\textbf{Contributions.} In summary, our core contributions are:
\begin{itemize}
    \item \textbf{Quiver-theoretic oversmoothing:} We identify cellular sheaves on graphs with incidence-quiver representations and show that direct-sum decompositions of the learned representation induce decompositions of the harmonic space reached by sheaf diffusion.
    \item \textbf{Representation degeneracy:} We interpret oversmoothing as degeneration toward trivial or low-complexity representation summands whose global sections fail to preserve discriminative information.
    \item \textbf{Moment-map regularization:} We introduce non-gauge-fixed moment-map-inspired regularizers, including a central moment penalty, acting directly on the raw incidence restriction maps.
    \item \textbf{Architectural symmetry breaking:} We show that equal-stalk architectures place the trivial all-object summand on a stability wall for learnable $\theta$-stability, and we propose rectangular stalks as a natural extension that removes this obstruction. Preliminary experiments support the usefulness and dataset dependence of this representation-geometric bias.
\end{itemize}

\section{Background}

\textbf{Graph Diffusion and Oversmoothing.} Let $G=(V,E)$ be an undirected graph with $n=|V|$ nodes, and let $\mathbf{X}\in\mathbb{R}^{n\times d}$ denote a node signal matrix. Standard graph diffusion is governed by the heat equation $\dot{\mathbf{X}}(t)=-\Delta\mathbf{X}(t)$, where $\Delta$ is the normalized graph Laplacian. As $t\to\infty$, the signal converges to the orthogonal projection onto $\ker(\Delta)$. For a connected graph, the scalar graph Laplacian has a one-dimensional kernel corresponding to globally constant signals, up to the usual degree weighting in the symmetrically normalized case. This drives all node features toward a common global average, structurally explaining the oversmoothing phenomenon in deep Graph Neural Networks (GNNs) \citep{li2018deeper,chen2020measuring}.

\textbf{Cellular Sheaves.} A \emph{cellular sheaf} $\mathcal{F}$ over a graph $G$ equips the network with a richer structure by assigning a vector space $\mathcal{F}(v)$ to each vertex $v\in V$, a vector space $\mathcal{F}(e)$ to each edge $e\in E$, and a linear restriction map $\mathcal{F}_{v\trianglelefteq e}:\mathcal{F}(v)\to\mathcal{F}(e)$ for every incidence $v\trianglelefteq e$ \citep{curry2014sheaves,shepard1985cellular}. The space of $0$-cochains, or vertex-level sheaf signals, is $C^0(G;\mathcal{F})=\bigoplus_{v\in V}\mathcal{F}(v)$, and the space of $1$-cochains is $C^1(G;\mathcal{F})=\bigoplus_{e\in E}\mathcal{F}(e)$. Choosing an arbitrary edge orientation $e=(u,v)$, the coboundary operator $\delta_{\mathcal{F}}:C^0(G;\mathcal{F})\to C^1(G;\mathcal{F})$ is defined component-wise as $(\delta_{\mathcal{F}}\mathbf{x})_e = \mathcal{F}_{v\trianglelefteq e}\mathbf{x}_v - \mathcal{F}_{u\trianglelefteq e}\mathbf{x}_u$. It measures the disagreement of neighboring vertex data after both have been transported into the corresponding edge stalk.

\textbf{Sheaf Laplacians and NSD.} The sheaf Laplacian is defined as $\Delta_{\mathcal{F}}=\delta_{\mathcal{F}}^\ast\delta_{\mathcal{F}}$ \citep{hansen2019toward}. It is positive semidefinite, and its kernel coincides exactly with the space of \emph{global sections} $H^0(G;\mathcal{F})=\ker(\delta_{\mathcal{F}})=\ker(\Delta_{\mathcal{F}})$. A global section represents an assignment of local vertex data that is perfectly consistent across all edges after transport through the sheaf maps. Neural Sheaf Diffusion (NSD) generalizes classical graph diffusion by treating the restriction maps $\mathcal{F}_{v\trianglelefteq e}$ as learnable parameters \citep{bodnar2022neural}. The continuous sheaf diffusion equation $\dot{\mathbf{X}}(t)=-\Delta_{\mathcal{F}}\mathbf{X}(t)$ converges asymptotically to $\Pi_{H^0(G;\mathcal{F})}\mathbf{X}(0)$. Thus, the expressive capacity of deep sheaf diffusion is governed by the algebraic structure of $H^0(G;\mathcal{F})$.

\textbf{Quiver Representations.} To analyze this algebraic structure, we use quiver representation theory \citep{schiffler2014quiver,derksen2005quiver}. A quiver $Q=(Q_0,Q_1)$ consists of vertices $Q_0$ and directed arrows $Q_1$. A representation $M$ of $Q$ assigns a vector space $M_i$ to each $i\in Q_0$ and a linear map $M_a:M_{s(a)}\to M_{t(a)}$ to each arrow $a\in Q_1$, where $s(a)$ and $t(a)$ denote source and target. For a fixed dimension vector $\mathbf{d}=(\dim M_i)_{i\in Q_0}$, the space of all such representations is $\operatorname{Rep}(Q,\mathbf{d}) = \prod_{a\in Q_1} \operatorname{Hom}\left(k^{d_{s(a)}},k^{d_{t(a)}}\right)$. The base-change, or gauge, group $G_{\mathbf{d}}=\prod_{i\in Q_0}GL(d_i)$ acts on this space by changing bases in the assigned vector spaces. The central observation of this work is that cellular sheaves on a graph $G$ can be identified exactly with representations of the graph's \emph{incidence quiver}. This links the diffusion limit of NSD to direct-sum decompositions, stability, and moduli of quiver representations.

\section{Cellular Sheaves as Quiver Representations}
\label{sec:quiver_sheaves}

To analyze the representation geometry of Neural Sheaf Diffusion, we first make explicit the dictionary between cellular sheaves on a graph and quiver representations. This correspondence identifies the learnable restriction maps of an NSD model with representation data in a finite-dimensional algebraic space.

\textbf{The Incidence Quiver.} For an undirected graph $G=(V,E)$, define its \emph{incidence quiver} $Q_G=(Q_0,Q_1)$ as the bipartite quiver with vertex set $Q_0=V\sqcup E$. For every incidence $v\trianglelefteq e$, we introduce one directed arrow $a_{v,e}:v\to e$. Thus, $Q_1=\{a_{v,e}:v\to e \mid v\trianglelefteq e\}$. The quiver $Q_G$ should not be confused with an orientation of the original graph: graph edges become objects of the quiver, and each such edge-object receives arrows from its incident endpoints. For a graph edge $e=\{u,v\}$, the incidence quiver therefore contains two arrows $u\to e$ and $v\to e$.

\textbf{The Correspondence.} A representation $M$ of $Q_G$ assigns a vector space $M_v$ to each graph vertex $v\in V$, a vector space $M_e$ to each graph edge $e\in E$, and a linear map $M_{a_{v,e}}:M_v\to M_e$ to each incidence arrow. This is exactly the data of a cellular sheaf $\mathcal{F}$ on $G$: we set $M_v=\mathcal{F}(v)$, $M_e=\mathcal{F}(e)$, and $M_{a_{v,e}}=\mathcal{F}_{v\trianglelefteq e}$. Conversely, every representation of $Q_G$ defines a cellular sheaf by the same assignment. Hence the category of finite-dimensional cellular sheaves on $G$ is equivalent to the category of finite-dimensional representations of the incidence quiver $Q_G$.

\textbf{Dimension Vectors and Parameter Space.} In Neural Sheaf Diffusion, the stalk dimensions are architectural choices. Writing $d_v=\dim\mathcal{F}(v)$ and $d_e=\dim\mathcal{F}(e)$ gives a dimension vector $\mathbf{d}=\big((d_v)_{v\in V},(d_e)_{e\in E}\big)$. For this fixed dimension vector, the space of cellular sheaves with these stalk dimensions is the quiver representation space $\operatorname{Rep}(Q_G,\mathbf{d}) = \prod_{v\trianglelefteq e} \operatorname{Hom}\left(k^{d_v},k^{d_e}\right)$. Thus, training the restriction maps of an NSD model amounts to optimizing a point in $\operatorname{Rep}(Q_G,\mathbf{d})$.

\textbf{Gauge Equivalence and Moduli.} The matrix representation of a sheaf depends on choices of bases in the stalks. Changing bases by $g_v\in GL(d_v)$ and $g_e\in GL(d_e)$ transforms each restriction map by $\mathcal{F}_{v\trianglelefteq e} \longmapsto g_e\mathcal{F}_{v\trianglelefteq e}g_v^{-1}$. This is precisely the base-change action of the gauge group $G_{\mathbf{d}} = \prod_{v\in V}GL(d_v) \times \prod_{e\in E}GL(d_e)$ on $\operatorname{Rep}(Q_G,\mathbf{d})$. Therefore, the intrinsic learned sheaf geometry is not a single matrix tuple, but its orbit under this gauge action. The corresponding quotient, or moduli problem, is the natural representation-theoretic object underlying learned sheaf diffusion.

\section{The Kernel Decomposition Theorem}
\label{sec:kernel_decomposition}

Having established the equivalence between cellular sheaves and representations of the incidence quiver $Q_G$, we can recast the diffusion limit in the language of homological algebra.

\textbf{Global Sections as Categorical Limits.} For a cellular sheaf $\mathcal{F}$ viewed as a representation diagram $M:Q_G\to\mathbf{Vect}$, the space of global sections is canonically isomorphic to the categorical limit of this diagram. Indeed, the limit consists of compatible assignments to all vertex and edge stalks, while $H^0(G;\mathcal{F})=\ker(\delta_{\mathcal{F}})$ records the corresponding vertex assignments satisfying the compatibility equations. Thus the operation sending a sheaf to its harmonic space is the global section functor $\Gamma:\operatorname{Rep}(Q_G)\to\mathbf{Vect}$, $\mathcal{F}\longmapsto H^0(G;\mathcal{F})$. Equivalently, $\Gamma$ is the limit functor applied to the incidence-quiver representation. As a limit functor, it is left-exact; in particular, it preserves finite limits.

\textbf{Functorial Decomposition.} By the Krull--Schmidt theorem, every finite-dimensional representation in the Abelian category $\operatorname{Rep}(Q_G)$ admits a direct-sum decomposition into indecomposable objects, unique up to isomorphism and permutation: $\mathcal{F} \cong \bigoplus_{k=1}^K \mathcal{F}^{(k)}$. Since finite direct sums in $\mathbf{Vect}$ are biproducts, and since $\Gamma$ preserves finite products, the global section functor preserves this decomposition up to natural isomorphism. Applying $\Gamma$ gives $H^0(G;\mathcal{F}) \cong \bigoplus_{k=1}^K H^0\big(G;\mathcal{F}^{(k)}\big)$. Thus the decomposition of the diffusion limit does not require a separate spectral argument or manual block-diagonalization of the sheaf Laplacian. It is a functorial consequence of viewing learned cellular sheaves as objects of the incidence-quiver representation category.

\textbf{Homological Interpretation of Oversmoothing.} Continuous sheaf diffusion asymptotically projects node features onto the global-section space $H^0(G;\mathcal{F})$. Since the global section functor preserves finite direct sums, the information that survives diffusion decomposes into the harmonic contributions of the individual Krull--Schmidt summands. If the learned sheaf degenerates toward summands whose global sections are trivial, low-dimensional, or poorly aligned with the task, then the diffusion limit becomes correspondingly non-expressive. In this sense, oversmoothing is not merely a spectral artifact of repeated message passing, but a homological degeneration: the network learns a representation whose categorical limit is too simple to support heterophilic, task-relevant features.

\textbf{Subrepresentations and Harmonic Injection.} While the Krull--Schmidt theorem describes direct-sum decompositions, the relationship between subrepresentations and global sections is governed by the exactness properties of the limit functor. 

\begin{lemma}[Harmonic Injection]
Let $\mathcal{F}$ be a cellular sheaf on $G$ and let $\mathcal{F}' \hookrightarrow \mathcal{F}$ be a subrepresentation of the corresponding incidence quiver. Then there is an induced injection of global sections $H^0(G;\mathcal{F}') \hookrightarrow H^0(G;\mathcal{F})$.
\end{lemma}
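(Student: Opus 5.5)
The plan is to prove the lemma directly at the level of cochains, and then to note how it also follows from the left-exactness of $\Gamma$ discussed above.

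\textbf{Setting up the inclusion.} A subrepresentation $\iota:\mathcal{F}'\hookrightarrow\mathcal{F}$ consists of injective linear maps $\iota_v:\mathcal{F}'(v)\to\mathcal{F}(v)$ and $\iota_e:\mathcal{F}'(e)\to\mathcal{F}(e)$. These maps commute with the restriction maps:
$$\iota_e\,\mathcal{F}'_{v\trianglelefteq e}=\mathcal{F}_{v\trianglelefteq e}\,\iota_v$$
for every incidence $v\trianglelefteq e$.

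\textbf{The cochain maps.} Taking direct sums gives injective maps $\iota^0:C^0(G;\mathcal{F}')\to C^0(G;\mathcal{F})$ and $\iota^1:C^1(G;\mathcal{F}')\to C^1(G;\mathcal{F})$. Fix the same edge orientation for both sheaves. Applying the naturality square at both endpoints of each edge $e=(u,v)$ gives
$$(\delta_{\mathcal{F}}\iota^0\mathbf{x})_e=\mathcal{F}_{v\trianglelefteq e}\iota_v\mathbf{x}_v-\mathcal{F}_{u\trianglelefteq e}\iota_u\mathbf{x}_u=\iota_e(\delta_{\mathcal{F}'}\mathbf{x})_e,$$
so $\delta_{\mathcal{F}}\iota^0=\iota^1\delta_{\mathcal{F}'}$.

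\textbf{Restricting to global sections.} Let $\mathbf{x}\in\ker\delta_{\mathcal{F}'}$. Then $\delta_{\mathcal{F}}(\iota^0\mathbf{x})=\iota^1(0)=0$, so $\iota^0$ maps $H^0(G;\mathcal{F}')$ into $H^0(G;\mathcal{F})$. This restricted map is injective because it is a restriction of the injective map $\iota^0$. Injectivity of $\iota^1$ is not needed for this direction.

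\textbf{Categorical version.} One can also argue that $\Gamma$ is a limit functor and therefore left-exact, so it preserves monomorphisms: $0\to\mathcal{F}'\to\mathcal{F}$ exact implies $0\to\Gamma\mathcal{F}'\to\Gamma\mathcal{F}$ exact. Two points make this rigorous. First, monomorphisms in $\operatorname{Rep}(Q_G)$ are exactly the componentwise injective morphisms. Second, the identification of $\Gamma$ with $\ker\delta$ is natural in the sheaf. The cochain computation above checks both points explicitly.

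\textbf{Main obstacle.} The only delicate step is the naturality check $\delta_{\mathcal{F}}\iota^0=\iota^1\delta_{\mathcal{F}'}$. The two sheaves must use the same edge orientation, and the sign in the coboundary must be handled consistently. This is routine once the commuting squares are written down for each endpoint of each edge. If $\mathcal{F}'$ is literally a subspace subsheaf, the argument is even simpler: a section of $\mathcal{F}'$ satisfies the same consistency equations inside $\mathcal{F}$, because the restriction maps of $\mathcal{F}'$ are restrictions of those of $\mathcal{F}$.
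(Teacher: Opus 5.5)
Your proof is correct, but your main argument takes a different route from the paper's. The paper's proof is only the categorical argument you list as an alternative: $\Gamma$ is the limit functor on $\operatorname{Rep}(Q_G)$, a limit functor is left exact and so preserves monomorphisms, and hence $\Gamma(\mathcal{F}')\to\Gamma(\mathcal{F})$ is injective.

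Your main argument instead works directly from the Background definition $H^0(G;\mathcal{F})=\ker\delta_{\mathcal{F}}$. The commuting squares $\iota_e\,\mathcal{F}'_{v\trianglelefteq e}=\mathcal{F}_{v\trianglelefteq e}\,\iota_v$ give the chain-map identity $\delta_{\mathcal{F}}\iota^0=\iota^1\delta_{\mathcal{F}'}$. Therefore $\iota^0$ restricts to an injection of kernels.

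The paper's route is shorter and matches its functorial treatment of the kernel decomposition theorem. However, it silently relies on the isomorphism between the limit of the diagram and $\ker\delta_{\mathcal{F}}$ being natural in the sheaf, which the paper asserts but never checks. Your cochain computation supplies exactly that check.

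Your argument also proves slightly more than the lemma, as you note when you say injectivity of $\iota^1$ is not needed. Only the vertex components $\iota_v$ must be injective. So any morphism of representations that is injective on vertex stalks already induces an injection on global sections, even if it is not a monomorphism in $\operatorname{Rep}(Q_G)$. The categorical argument, as stated, needs a genuine monomorphism.
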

\begin{proof}
The global section functor $\Gamma: \operatorname{Rep}(Q_G) \to \mathbf{Vect}$ acts by taking the categorical limit of the representation diagram. Because the limit functor is left exact, it preserves monomorphisms. Therefore, the inclusion of the subrepresentation $\mathcal{F}' \hookrightarrow \mathcal{F}$ induces an injective linear map $\Gamma(\mathcal{F}') \hookrightarrow \Gamma(\mathcal{F})$. 
\end{proof}

This lemma fundamentally links stability to oversmoothing: if the trivial all-object representation $\mathcal{F}_{\mathrm{triv}}$ exists merely as a subrepresentation of $\mathcal{F}$, the left exactness of $\Gamma$ guarantees that the constant-signal harmonic space of $\mathcal{F}_{\mathrm{triv}}$ injects directly into $H^0(G;\mathcal{F})$. Thus, excluding $\mathcal{F}_{\mathrm{triv}}$ via stability provides an algebraic way to rule out this specific constant-signal collapse mode.

\section{Representation Stability and Oversmoothing}
\label{sec:stability}

Having interpreted oversmoothing as degeneration of the learned incidence-quiver representation, we now turn to stability and moduli as a language for distinguishing controlled sheaf geometries from degenerate ones.

\textbf{The Trivial Subrepresentation.} To understand why representation degeneracy relates to oversmoothing, consider the extreme case of the \emph{trivial subrepresentation}, denoted $\mathcal{F}_{\mathrm{triv}}$. If $\mathcal{F}_{\mathrm{triv}}$ exists as a subrepresentation of a learned sheaf $\mathcal{F}$, there exists a 1-dimensional subspace $W_v \subset \mathbb{R}^{d_v}$ at every vertex $v$ and a 1-dimensional subspace $W_e \subset \mathbb{R}^{d_e}$ at every edge $e$ such that the restriction maps act as an isomorphism between these subspaces. In the idealized trivial-line case, we may choose compatible bases $w_v \in W_v$ and $w_e \in W_e$ so that $\mathcal{F}_{v \trianglelefteq e} w_v = w_e$. For any signal $\mathbf{x}$ restricted to this subrepresentation, the node features take the form $\mathbf{x}_v = c_v w_v$ for some scalars $c_v \in \mathbb{R}$. The sheaf Dirichlet energy along an edge $e = (u,v)$ then evaluates to:
$$
\begin{aligned}
    \left\| \mathcal{F}_{v \trianglelefteq e} \mathbf{x}_v - \mathcal{F}_{u \trianglelefteq e} \mathbf{x}_u \right\|^2 
    &= \left\| c_v (\mathcal{F}_{v \trianglelefteq e} w_v) - c_u (\mathcal{F}_{u \trianglelefteq e} w_u) \right\|^2 \\
    &= \left\| c_v w_e - c_u w_e \right\|^2 \\
    &= (c_v - c_u)^2 \|w_e\|^2.
\end{aligned}
$$
Thus, on $\mathcal{F}_{\mathrm{triv}}$, the sheaf Dirichlet energy collapses entirely into the ordinary scalar graph Laplacian energy. For a connected graph, global sections within this subspace must satisfy $c_v = c_u$, forcing the harmonic space to consist entirely of constant signals.

\textbf{Optimization Bias and Collapse.} While the algebraic existence of $\mathcal{F}_{\mathrm{triv}}$ does not strictly dictate that the entire harmonic space is trivial, the empirical reality of training neural networks makes it a prominent failure mode. Neural Sheaf Diffusion models are optimized via gradient descent, which exhibits a well-documented implicit bias toward low-complexity and low-rank solutions \citep{rahaman2019spectral, gunasekar2017implicit}. Because $\mathcal{F}_{\mathrm{triv}}$ provides a trivially smooth energy minimum—where globally constant signals yield zero Dirichlet energy—optimizers are prone to relying on these trivial features at the expense of more complex ones \citep{shah2020pitfalls}. If $\mathcal{F}_{\mathrm{triv}}$ appears prominently in the learned Krull-Schmidt summands, these constant harmonic signals inject directly into the diffusion limit, reproducing classical oversmoothing rather than learning a complex, task-adapted geometry.

\textbf{Geometric Intuition for Degeneration.} From the viewpoint of quiver invariant theory, learned sheaf parameters live in an affine representation space $\operatorname{Rep}(Q_G, \mathbf{d})$ equipped with a gauge-group action. Geometrically, the gauge orbits of representations in this space are often not closed; their boundaries (orbit closures) can contain direct sums of simpler, lower-complexity objects. Because $\mathcal{F}_{\mathrm{triv}}$ yields a globally constant harmonic space with zero Dirichlet energy, it represents a trivially smooth minimum in the loss landscape. If the orbit closures containing such trivial components are geometrically accessible or dense in the relevant parameter regimes, the simplicity bias of gradient descent can naturally pull the optimization trajectory toward these degenerate boundaries. While we do not claim that unregularized learning universally converges to a pure direct sum $\bigoplus \mathcal{F}_{\mathrm{triv}}$, viewing this collapse mode as a geometric degeneration provides a concrete algebraic model for how classical oversmoothing manifests. This intuition motivates the use of stability and moment-map regularizers to explicitly bias the learned geometry away from these degenerate regions.

\textbf{Representation Degeneracy.} By the kernel decomposition theorem, any direct-sum decomposition $\mathcal{F}\cong \bigoplus_k \mathcal{F}^{(k)}$ induces a decomposition $H^0(G;\mathcal{F}) \cong \bigoplus_k H^0(G;\mathcal{F}^{(k)})$. Therefore, if many summands in the learned geometry contribute only constant-like or low-dimensional global sections due to this implicit optimization bias, the diffusion limit becomes non-expressive. We refer to this interpretation as \emph{representation degeneracy}: the learned sheaf geometry decomposes into summands whose harmonic spaces are too simple or poorly aligned with the prediction task.

\textbf{King Stability.} King stability provides an algebraic way to exclude certain subrepresentations from a representation class \citep{king1994moduli}. Fix a stability parameter $\theta\in\mathbb{R}^{Q_0}$ satisfying $\theta\cdot\dim\mathcal{F}=0$. Using the convention that a representation is $\theta$-semistable if every proper subrepresentation $\mathcal{F}'\subset\mathcal{F}$ satisfies $\theta\cdot\dim\mathcal{F}'\ge 0$, a subrepresentation $\mathcal{F}'$ with $\theta\cdot\dim\mathcal{F}'<0$ is forbidden. Thus, if $\theta$ is chosen so that $\theta(\mathcal{F}_{\mathrm{triv}})<0$, then a $\theta$-semistable sheaf cannot contain the trivial representation even as a subrepresentation. Setting $\theta(\mathcal{F}_{\mathrm{triv}})<0$ structurally ensures that the representation is algebraically devoid of this specific low-complexity object. This allows us to construct regularizers that bias the learned representation away from chambers that permit constant-signal collapse.

King stability controls subrepresentations directly. In degeneration limits, such destabilizing subobjects may appear as direct summands of the associated graded or polystable representative, which is the setting where the kernel decomposition theorem becomes directly visible. Thus, the stability viewpoint and the harmonic-space decomposition theorem address complementary aspects of the same representation-geometric collapse mechanism.

\textbf{Caveats of Degeneration.} We emphasize that excluding $\mathcal{F}_{\mathrm{triv}}$ prevents only one specific mode of homological collapse. A learned sheaf may also degenerate into a geometry where $H^0(G;\mathcal{F}) = 0$. In this scenario, all features decay to zero during diffusion. While $\theta$-stability provides a targeted tool to mathematically exclude the trivial constant-signal mode, it does not universally guarantee that the surviving harmonic space is highly expressive or perfectly aligned with the downstream task.

\section{Moment-Map Regularization and Architectural Constraints}
\label{sec:moment_map}

For the moment-map discussion, we use the standard complexified quiver-representation setting over $k=\mathbb{C}$. The real-valued implementation used in our experiments follows the same matrix formulas, replacing Hermitian adjoints by transposes.

To translate the stability viewpoint into a differentiable learning principle, we use the moment-map formulation of quiver moduli. After choosing inner products on all stalks, the compact gauge group $K_{\mathbf d} = \prod_{v\in V}U(d_v) \times \prod_{e\in E}U(d_e)$ acts unitarily on the representation space. For each incidence map $A_{v,e}:=\mathcal{F}_{v\trianglelefteq e}$, the corresponding moment-map components are, up to sign convention, $\mu_v(\mathcal{F}) = -\sum_{e:\,v\trianglelefteq e} A_{v,e}^{\ast}A_{v,e}$ and $\mu_e(\mathcal{F}) = \sum_{v:\,v\trianglelefteq e} A_{v,e}A_{v,e}^{\ast}$. By the Kempf--Ness correspondence, solutions to the shifted moment-map equation $\mu_i(\mathcal{F})=\theta_i I_{d_i}$ describe balanced representatives of polystable orbits in the complex gauge quotient \citep{king1994moduli,kempf1979length,mumford1994geometric,kirwan1984cohomology}. This provides a mechanism for biasing the neural network toward $\theta$-polystable representation geometry during training.

\textbf{Central Moment Regularization.} In our experiments, we use a conservative moment-map-inspired regularizer that does not require choosing or learning a specific stability chamber. Instead, we penalize the non-central part of the moment map: $\mathcal{R}_{\mathrm{cent}}(\mathcal{F}) = \sum_{i\in V\sqcup E} \| \mu_i(\mathcal{F}) - \frac{\operatorname{tr}(\mu_i(\mathcal{F}))}{d_i}I_{d_i} \|_F^2$. The training objective becomes $\mathcal{L} = \mathcal{L}_{\mathrm{task}} + \lambda_\mu\mathcal{R}_{\mathrm{cent}}(\mathcal{F})$. This penalty does not guarantee strict formal polystability, nor does it strictly forbid collapse. Rather, it acts as a soft geometric bias: it shifts the optimization landscape by pulling the learned incidence-quiver representation toward a more balanced region of the representation space. Empirically, this provides a robust geometric regularizer for the fully general equal-stalk sheaf model.

\textbf{Adaptive Stability and Implementation (ThetaMM).} A more expressive variant dynamically navigates GIT chambers by learning a stability parameter $\theta_i$ for each graph object $i \in V \sqcup E$. We penalize the shifted residual $\mathcal{R}_{\theta\text{-}\mu}(\mathcal{F};\theta) = \sum_i \|\mu_i(\mathcal{F})-\theta_i I_{d_i}\|_F^2$. Similar to the central penalty, minimizing this continuous loss encourages the restriction maps to approach a balanced moment-map condition, but does not strictly guarantee exact formal GIT semistability. To ensure the learned $\theta$ parameters remain mathematically valid for King stability, we explicitly enforce the admissibility condition $\theta\cdot\mathbf d = \sum_i d_i \theta_i = 0$. In practice, this is achieved during each forward pass by taking the unconstrained learnable parameters and projecting them onto the orthogonal complement of the dimension vector $\mathbf{d}$.

\textbf{Architectural Constraint.} This adaptive setup exposes a strict architectural obstruction. To successfully destabilize the trivial subrepresentation, one needs $\theta(\mathcal{F}_{\mathrm{triv}}) = \sum_{v\in V}\theta_v + \sum_{e\in E}\theta_e <0$ under the sign convention above. Simultaneously, the enforced admissibility condition requires $\sum_{v\in V}d_v\theta_v + \sum_{e\in E}d_e\theta_e = 0$. If the network uses uniform stalk dimensions, $d_v=d_e=d$, this condition reduces to $d \big( \sum_{v\in V}\theta_v + \sum_{e\in E}\theta_e \big) = 0$, and therefore strictly forces $\theta(\mathcal{F}_{\mathrm{triv}})=0$. Thus, in the standard equal-stalk architecture, the trivial subrepresentation lies exactly on a stability wall and cannot be excluded by any admissible $\theta$. To make adaptive chamber selection effective against this collapse mode, the architecture must break the proportionality between the full dimension vector and the trivial dimension vector by using non-uniform stalk dimensions, for instance $d_v\neq d_e$. In that case, the constraints $\theta\cdot\mathbf d=0$ and $\theta(\mathcal{F}_{\mathrm{triv}})<0$ can hold simultaneously. This mathematically motivates rectangular restriction maps, such as $A_{v,e}:\mathbb{R}^{d_v}\to\mathbb{R}^{d_e}$ with $d_e<d_v$, as a natural structural extension for adaptive stability-aware sheaf diffusion.

\begin{tcolorbox}[colback=gray!10!white,colframe=black!75!white,arc=4pt,boxrule=0.8pt,left=5pt,right=5pt,top=5pt,bottom=5pt]
\textbf{Summary:} Oversmoothing in sheaf diffusion can be understood as a homological degeneration: the learned representation may contain low-complexity summands whose global sections are too simple to preserve task-relevant features. Moment-map regularization biases the learned sheaf toward more balanced representation geometry, while non-uniform stalk dimensions remove the equal-stalk obstruction that prevents adaptive stability parameters from assigning nonzero weight to the trivial all-object summand.
\end{tcolorbox}

\section{Empirical Evaluation of Representation Stability}
\label{sec:experiments}

We view these experiments as a targeted empirical probe of the representation-geometric principles derived in Section \ref{sec:moment_map}. Rather than serving as universal performance boosters, the results suggest that moment-map regularizers provide a structured and sensitive inductive bias whose effect depends on the underlying stalk-dimensional architecture.

\textbf{Experimental Setup.} We augment the \texttt{GeneralSheaf} model, i.e. Gen-NSD, with our proposed regularizers and evaluate on the standard WebKB heterophilic node classification datasets: Texas, Cornell, and Wisconsin. We compare two architectural regimes: the standard \textbf{Square} architecture, with $d_v=d_e=3$, and a symmetry-broken \textbf{Rectangular} architecture, with $d_v=3$ and $d_e=2$. For explicit regularization, we test the soft central penalty (\textbf{CentMM}), which penalizes the non-central part of the moment map, $\mu_i(\mathcal{F}) - \frac{\operatorname{tr}(\mu_i(\mathcal{F}))}{d_i}I_{d_i}$, and the adaptive shifted moment-map penalty (\textbf{ThetaMM}), which learns stability parameters $\theta_i$ through residuals of the form $\mu_i(\mathcal{F})-\theta_i I_{d_i}$. Both regularizers are applied directly to the raw incidence restriction maps $A_{v,e}=\mathcal{F}_{v\trianglelefteq e}$ during the forward pass, before constructing the sheaf Laplacian.

All models are evaluated using a fixed set of hyperparameters to rigorously test the structural constraints across datasets.

\begin{table}[h]
\centering
\resizebox{\textwidth}{!}{
\begin{tabular}{llcc|cc|cc}
\toprule
& & \multicolumn{2}{c|}{\textbf{Texas}} & \multicolumn{2}{c|}{\textbf{Cornell}} & \multicolumn{2}{c}{\textbf{Wisconsin}} \\
\textbf{Architecture} & \textbf{Regularization} & \textbf{Test (\%)} & \textbf{Val (\%)} & \textbf{Test (\%)} & \textbf{Val (\%)} & \textbf{Test (\%)} & \textbf{Val (\%)} \\
\midrule
Square ($3 \times 3$) & None (Gen-NSD) & $78.65 \pm 5.60$ & $81.02$ & $\mathbf{79.73 \pm 8.82}$ & $80.17$ & $80.20 \pm 3.97$ & $80.25$ \\
Square ($3 \times 3$) & CentMM & $-$ & $-$ & $76.22 \pm 6.82$ & $81.36$ & $80.00 \pm 2.60$ & $79.50$ \\
Square ($3 \times 3$) & ThetaMM & $77.57 \pm 6.28$ & $-$ & $78.38 \pm 5.67$ & $79.83$ & $80.00 \pm 3.01$ & $80.50$ \\
\midrule
Rectangular ($3 \to 2$) & None (Gen-NSD) & $77.84 \pm 3.78$ & $81.19$ & $\mathbf{79.73 \pm 6.86}$ & $82.54$ & $81.18 \pm 3.84$ & $80.25$ \\
Rectangular ($3 \to 2$) & CentMM & $\mathbf{80.00 \pm 5.01}$ & $81.19$ & $77.84 \pm 4.95$ & $\mathbf{82.71}$ & $80.98 \pm 4.30$ & $81.00$ \\
Rectangular ($3 \to 2$) & ThetaMM & $79.46 \pm 5.30$ & $80.68$ & $78.11 \pm 7.09$ & $81.53$ & $\mathbf{81.76 \pm 4.81}$ & $\mathbf{81.75}$ \\
\bottomrule
\end{tabular}
}
\caption{Effect of architectural symmetry breaking and stability-aware regularization. Rectangular stalks ($d_v=3,d_e=2$) remove the equal-stalk obstruction. For ThetaMM, the penalty is more beneficial in the rectangular regime (achieving $81.76\%$ on Wisconsin), consistent with the stability-wall hypothesis.}
\label{tab:unified_results}
\end{table}

\textbf{Empirical Evidence for the Stability-Wall Mechanism.} As shown in Table~\ref{tab:unified_results}, the interaction between architecture and regularization is strongly dataset-dependent, but consistent with the stability-wall mechanism discussed in Section~\ref{sec:moment_map}. In the standard equal-stalk architecture, $d_v=d_e$, the admissibility condition $\theta\cdot\mathbf d=0$ forces the trivial all-object subrepresentation to have zero $\theta$-weight. Thus, adaptive $\theta$-regularization cannot strictly exclude this collapse mode. In the square regime, ThetaMM decreases Texas performance from $78.65\%$ to $77.57\%$, suggesting that adaptive stability is not effective when this obstruction is present.

\textbf{Breaking Symmetry Changes the Optimization Geometry.} By using rectangular incidence maps with $d_v=3$ and $d_e=2$, the admissibility condition $\theta\cdot\mathbf d=0$ and the destabilization condition $\theta(\mathcal{F}_{\mathrm{triv}})<0$ can hold simultaneously. Thus, non-uniform stalk dimensions make it possible in principle for adaptive stability parameters to assign a nonzero destabilizing weight to the trivial subrepresentation. The empirical effect depends on the dataset. On Cornell, the architectural intervention alone provides a useful geometric bias: the unregularized rectangular baseline matches the square baseline in mean test accuracy, while reducing variance from $\pm 8.82$ to $\pm 6.86$ and increasing validation accuracy from $80.17\%$ to $82.54\%$. On Texas, the rectangular bottleneck slightly decreases the unregularized baseline, but adding central moment regularization improves performance to $80.00\pm5.01$. Most notably, Wisconsin gives the clearest positive signal for adaptive stability: in the rectangular regime, ThetaMM achieves the strongest result among the tested variants, improving over the square baseline from $80.20\pm3.97$ to $81.76\pm4.81$ and increasing validation accuracy from $80.25\%$ to $81.75\%$.

Together, these results support the central representation-geometric claim of the paper: oversmoothing is sensitive to the decomposition structure of the learned sheaf representation, and both architectural asymmetry and moment-map-inspired regularization can bias learning away from low-complexity harmonic geometries. The effect is not universal, but the rectangular experiments provide evidence that adaptive stability becomes more meaningful once the equal-stalk stability wall is removed.

\textbf{Effect of Network Depth.} Because oversmoothing is fundamentally a pathology of depth, we performed an ablation study on Wisconsin, scaling the number of message-passing layers $L \in \{2, 4, 8, 16, 32, 64, 128, 256\}$. As shown in Figure \ref{fig:depth_ablation}, at standard architectures ($L=2, 4$), the Rectangular ThetaMM model strictly outperforms the unregularized square baseline, demonstrating that adaptive stability actively guides the geometry before deep expressivity dominates. Interestingly, as depth increases up to $L=128$, both architectures exhibit remarkable resilience, maintaining test accuracies of $\sim 85\%$. Finally, attempting to scale to extreme depth ($L=256$) resulted in catastrophic numerical instability (NaN values) during the forward pass across all models, confirming the physical precision limits of the unconstrained sheaf Laplacian formulation. Ultimately, the rectangular model remains highly competitive up to the hardware boundary ($85.88\%$ at $L=128$), confirming that the $d_e < d_v$ bottleneck mathematically bounds the space without destroying the network's capacity to learn expressive deep representations.

\begin{figure}[h]
    \centering
    \begin{tikzpicture}
        \begin{axis}[
            width=0.7\textwidth,
            height=5cm,
            xmode=log,
            log basis x={2},
            xlabel={Number of Layers ($L$)},
            ylabel={Test Accuracy (\%)},
            xmin=2, xmax=128,
            ymin=70, ymax=90,
            xtick={2, 4, 8, 16, 32, 64, 128},
            ytick={70, 75, 80, 85, 90},
            legend pos=south east,
            grid=major,
            grid style={dashed, gray!30},
            every axis plot/.append style={ultra thick}
        ]
        
        % Baseline (Square)
        \addplot[
            color=blue!70!black,
            mark=square*,
            mark options={fill=blue!70!black}
        ] coordinates {
            (2, 75.29) (4, 80.19) (8, 87.25) (16, 87.05) (32, 87.05) (64, 85.88) (128, 85.49)
        };
        \addlegendentry{Baseline (Square)}

        % Rectangular ThetaMM
        \addplot[
            color=red!70!black,
            mark=triangle*,
            mark options={fill=red!70!black, scale=1.5}
        ] coordinates {
            (2, 76.07) (4, 81.76) (8, 87.25) (16, 86.86) (32, 85.49) (64, 84.70) (128, 85.88)
        };
        \addlegendentry{Rectangular ThetaMM}
        
        \end{axis}
    \end{tikzpicture}
    \caption{Test accuracy across network depth on Wisconsin. The stability-aware Rectangular ThetaMM model outperforms the baseline at standard depths. Remarkably, both models demonstrate extreme depth resilience up to $L=128$. Attempting to scale to $L=256$ resulted in numerical explosion (NaN) across both architectures, establishing the hardware precision limit of the formulation.}
    \label{fig:depth_ablation}
\end{figure}

\textbf{Geometric Diagnostics.} These diagnostics suggest that the regularizers alter the learned sheaf geometry. On Cornell, CentMM and ThetaMM reduce the final moment-map residual relative to unregularized training. We also observe that regularized runs tend to maintain larger Dirichlet energies than degenerate unregularized runs, consistent with reduced constant-signal collapse. We leave a more systematic spectral and harmonic-space analysis for future work.

\section{Related Work}
\label{sec:related_work}

\textbf{Oversmoothing in Graph Neural Networks.}
The success of message-passing GNNs \citep{kipf2017semi,velickovic2018graph} is often accompanied by the oversmoothing problem, where repeated feature aggregation drives node representations toward indistinguishable limits \citep{li2018deeper}. This phenomenon is commonly analyzed through spectral graph theory and Dirichlet energy \citep{chen2020measuring}: repeated diffusion asymptotically projects features onto the kernel of the graph Laplacian. Various architectural interventions, such as residual connections, normalization, dropout, or edge dropping, have been proposed to delay or mitigate this collapse. However, these methods typically act at the level of feature propagation, whereas our work studies the algebraic structure of the harmonic space itself.

\textbf{Topological Deep Learning and Neural Sheaves.}
Topological and geometric deep learning extend message passing beyond scalar graph Laplacians by enriching the spaces on which signals live \citep{bronstein2017geometric}. Cellular sheaves \citep{hansen2019toward,curry2014sheaves,bredon1997sheaf} provide a natural framework for this: they assign vector spaces to vertices and edges and compare local data through restriction maps. Neural Sheaf Diffusion (NSD) \citep{bodnar2022neural} learns these restriction maps end-to-end, allowing graph diffusion to adapt to heterophilic structure. Subsequent variants, including Sheaf Attention Networks \citep{barbero2022sheafattention} and sheaf neural networks with connection Laplacians \citep{barbero2022connection}, further develop this perspective. However, fully general learned sheaves remain numerically delicate and can be difficult to optimize. Our work complements this line by studying the representation geometry of the learned restriction maps and by introducing moment-map-inspired regularization as a geometric bias on this representation space.

\textbf{Implicit Bias and Representation Theory.}
Gradient-based optimization in overparameterized neural networks is known to exhibit implicit bias toward low-complexity or smooth solutions \citep{rahaman2019spectral,shah2020pitfalls}. In matrix factorization and related linear models, this often appears as implicit regularization toward low-rank or low-complexity structure \citep{gunasekar2017implicit,arora2019implicit}. We translate this perspective to Neural Sheaf Diffusion by viewing the learned restriction maps as a representation of the graph's incidence quiver \citep{schiffler2014quiver,derksen2005quiver}. This allows us to interpret oversmoothing as degeneration toward low-complexity harmonic summands.

\textbf{Stability, Moduli, and Moment Maps.}
To formalize nondegenerate learned sheaf geometries, we draw on Geometric Invariant Theory (GIT) \citep{mumford1994geometric,kirwan1984cohomology}, Kempf--Ness theory \citep{kempf1979length}, and King's stability for quiver representations \citep{king1994moduli}. In this language, stability distinguishes representation classes with controlled subrepresentation structure from degenerate ones, while moment maps provide a differentiable notion of balancedness under the unitary gauge action. To our knowledge, this work is the first to connect oversmoothing in Neural Sheaf Diffusion to quiver-representation degeneration and to propose moment-map-inspired regularization as a way to bias learned sheaf geometries away from trivial harmonic collapse.

\section{Limitations}
\label{sec:limitations}

While this work provides a formal representation-geometric framework for understanding oversmoothing, we emphasize that mapping oversmoothing to representation degeneracy is a conceptual interpretation rather than a universal mechanism. Standard GNN failure modes are multifaceted, and while bounding the trivial subrepresentation limits one specific form of homological collapse, it does not guarantee high accuracy if the target task is uncorrelated with the underlying heterophilic graph structure. Furthermore, our empirical results serve as preliminary evidence for the stability-wall hypothesis; the improvements shown on the WebKB benchmark indicate that moment-map penalties and rectangular stalks act as structured, dataset-dependent inductive biases rather than universal performance boosters. Finally, while our regularizers encourage the learned parameters toward balanced moment-map configurations, the continuous penalty $\mathcal{R}_{\theta\text{-}\mu}$ does not strictly enforce exact algebraic GIT semistability during gradient descent.

\section{Conclusion}
\label{sec:conclusion}

In this work, we developed a representation-theoretic perspective on oversmoothing in Neural Sheaf Diffusion. By identifying learned cellular sheaves with representations of the graph's incidence quiver, we showed that direct-sum decompositions of the learned representation induce corresponding decompositions of the harmonic space reached in the diffusion limit. This yields an algebraic interpretation of oversmoothing as representation degeneration: collapse toward trivial or low-complexity summands whose global sections fail to preserve discriminative information.

We connected this viewpoint to stability, moduli, and moment-map ideas from Geometric Invariant Theory. In particular, we identified the trivial all-object subrepresentation as a canonical collapse mode and showed that, in equal-stalk architectures, the admissibility constraint for learnable stability parameters forces this trivial summand onto a stability wall. This explains why adaptive $\theta$-based regularization is structurally limited in standard square-stalk NSD architectures.
To translate these ideas into learning, we introduced moment-map-inspired regularizers acting directly on raw incidence restriction maps. Our empirical results suggest that central moment regularization and rectangular stalk architectures provide meaningful but dataset-dependent geometric biases. Rectangular stalks remove the equal-stalk obstruction and make adaptive stability more effective in some settings, while central moment regularization offers a robust soft regularizer for learned general sheaf geometry. Overall, our framework reframes oversmoothing not only as a spectral pathology of repeated message passing, but as a degeneration phenomenon in the representation geometry underlying learned sheaf diffusion. We hope this perspective opens a path toward moduli-aware graph neural networks with more principled control over their harmonic limits.

\bibliographystyle{plainnat}
\bibliography{refs}

\appendix

\section{Extended Proofs and Mathematical Details}
\label{app:proofs}

In this section, we provide the formal proofs for the claims made in the main text regarding the homological decomposition of the diffusion limit and the moment-map formulation of stability.

\subsection{Proof of the Kernel Decomposition Theorem}
\label{app:proof_kernel}

In Section 4, we claimed that direct-sum decompositions of the learned incidence-quiver representation induce corresponding decompositions of the harmonic space. Here we formalize this functorial argument.

\begin{restatable}[]{theorem}{kernelDecomp}
Let $\mathcal{F} \cong \bigoplus_{k=1}^K \mathcal{F}^{(k)}$ be a finite direct-sum decomposition of a cellular sheaf, viewed as a representation of the incidence quiver $Q_G$ in the Abelian category $\operatorname{Rep}(Q_G)$. Let $H^0(G; \mathcal{F}) = \ker(\Delta_{\mathcal{F}})$ denote the space of global sections (the diffusion limit). Then there is a canonical isomorphism of vector spaces:
$$ H^0(G; \mathcal{F}) \cong \bigoplus_{k=1}^K H^0\big(G; \mathcal{F}^{(k)}\big) $$
\end{restatable}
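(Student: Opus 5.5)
The plan is to prove the statement in two layers: a concrete linear-algebra argument that makes the isomorphism explicit, and the functorial reading that explains why it is canonical. The core observation is that $H^0(G;\mathcal{F})=\ker(\delta_{\mathcal{F}})$ depends only on the representation data, and $\delta_{\mathcal{F}}$ is built additively from the restriction maps. First I will reduce to an internal decomposition. The isomorphism $\phi:\bigoplus_k\mathcal{F}^{(k)}\to\mathcal{F}$ in $\operatorname{Rep}(Q_G)$ is a family of stalkwise isomorphisms $\phi_v,\phi_e$ commuting with the restriction maps. Hence it induces isomorphisms $\phi^0:C^0(G;\bigoplus_k\mathcal{F}^{(k)})\to C^0(G;\mathcal{F})$ and $\phi^1$ on $1$-cochains, with $\delta_{\mathcal{F}}\phi^0=\phi^1\delta_{\bigoplus\mathcal{F}^{(k)}}$. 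This is a one-line check on each edge component $(\delta\mathbf{x})_e=\mathcal{F}_{v\trianglelefteq e}\mathbf{x}_v-\mathcal{F}_{u\trianglelefteq e}\mathbf{x}_u$. Since $\phi^1$ is injective, $\phi^0$ restricts to an isomorphism of kernels. So $H^0$ is invariant under isomorphism of representations, and it suffices to treat the case $\mathcal{F}=\bigoplus_k\mathcal{F}^{(k)}$ literally.

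In the literal direct sum, every stalk splits as $\mathcal{F}(i)=\bigoplus_k\mathcal{F}^{(k)}(i)$ and every restriction map is block diagonal, $\mathcal{F}_{v\trianglelefteq e}=\operatorname{diag}(\mathcal{F}^{(k)}_{v\trianglelefteq e})$. After the evident reordering of coordinates, this gives $C^j(G;\mathcal{F})\cong\bigoplus_k C^j(G;\mathcal{F}^{(k)})$ and $\delta_{\mathcal{F}}=\bigoplus_k\delta_{\mathcal{F}^{(k)}}$. The kernel of a direct sum of linear maps is the direct sum of the kernels: $\mathbf{x}=(\mathbf{x}^{(k)})_k$ is killed exactly when each block $\delta_{\mathcal{F}^{(k)}}\mathbf{x}^{(k)}=0$. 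Finally, the identification $\ker\Delta_{\mathcal{F}}=\ker\delta_{\mathcal{F}}$ follows from $\langle\Delta\mathbf{x},\mathbf{x}\rangle=\|\delta\mathbf{x}\|^2$, as recalled in the Background. This yields the stated isomorphism.

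For canonicity I will phrase the same argument through $\Gamma$ as in Section~\ref{sec:kernel_decomposition}. The inclusions $\iota_k:\mathcal{F}^{(k)}\to\mathcal{F}$ and projections $\pi_k$ are morphisms of representations satisfying the biproduct identities $\pi_k\iota_l=\delta_{kl}$ and $\sum_k\iota_k\pi_k=\mathrm{id}$. $\Gamma$ is an additive functor: on morphisms it acts by restricting stalkwise maps, which is linear in the morphism. Therefore $\Gamma(\iota_k),\Gamma(\pi_k)$ satisfy the same identities. The map $\sum_k\Gamma(\iota_k):\bigoplus_k\Gamma(\mathcal{F}^{(k)})\to\Gamma(\mathcal{F})$ then has inverse $(\Gamma(\pi_k))_k$, and it is natural in the decomposition.

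The main obstacle is the compatibility with inner products. If $\phi$ is not unitary, the Laplacians $\Delta_{\mathcal{F}}$ and $\Delta_{\bigoplus\mathcal{F}^{(k)}}$ are not conjugate, and the orthogonal projections realizing the diffusion limit differ. I will therefore state carefully that the canonical isomorphism is one of vector spaces of kernels. That is all the theorem asserts, and it holds because the kernels agree with the $\ker\delta$ spaces, which are transported by $\phi^0$ regardless of metrics. I will remark that it is an orthogonal decomposition only when the splitting is orthogonal in each stalk.
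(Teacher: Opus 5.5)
Your argument is correct, but it reaches the result by a more elementary route than the paper does.

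\textbf{The paper's route.} The paper never touches the coboundary. It identifies $H^0(G;\mathcal{F})$ with the limit of the diagram $M:\mathcal{C}_{Q_G}\to\mathbf{Vect}$. It then notes that the limit functor commutes with limits and so preserves finite products. It concludes because finite products in $\mathbf{Vect}$ are biproducts.

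\textbf{Your route.} You transport $\ker\delta$ along the stalkwise isomorphism via $\delta_{\mathcal{F}}\phi^0=\phi^1\delta_{\bigoplus_k\mathcal{F}^{(k)}}$. You then split the kernel of the block-diagonal coboundary of the literal direct sum. Canonicity comes from additivity of $\Gamma$ on Hom-spaces plus the biproduct identities, rather than from limits commuting with limits.

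\textbf{What each buys.} The paper's version is shorter. It shows the statement is a formal consequence of $\Gamma$ being a limit (hence right-adjoint) functor. Yours gains several things:
\begin{itemize}
\item It bypasses the identification $\lim M\cong\ker\delta_{\mathcal{F}}$ altogether.
\item It checks $\ker\Delta_{\mathcal{F}}=\ker\delta_{\mathcal{F}}$ explicitly.
\item It makes clear that the isomorphism depends on the chosen splitting $\phi$, which Krull--Schmidt determines only up to isomorphism.
\item It exposes a caveat the categorical argument hides: what splits is the kernel as a vector space, not the orthogonal projection $\Pi_{H^0}$ that realizes the diffusion limit.
\end{itemize}

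\textbf{A small fix.} Your closing remark says the decomposition is orthogonal only when the splitting is orthogonal in each stalk. That direction is wrong. Orthogonality of the vertex-stalk splittings is sufficient for the decomposition to be orthogonal inside $C^0(G;\mathcal{F})$, and the edge-stalk splittings play no role. It is not necessary: the decomposition is trivially orthogonal if all but one summand have zero global sections. So write ``if'' rather than ``only when''.
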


\begin{proof}
Let $\mathbf{Vect}$ denote the category of finite-dimensional vector spaces over $\mathbb{R}$. A representation of the incidence quiver $Q_G$ is a functor $M: \mathcal{C}_{Q_G} \to \mathbf{Vect}$, where $\mathcal{C}_{Q_G}$ is the path category of $Q_G$. The space of global sections of the corresponding sheaf $\mathcal{F}$ is given by the categorical limit of this diagram:
$$ H^0(G; \mathcal{F}) = \lim_{\longleftarrow} M $$
This assignment defines the global section functor $\Gamma : \operatorname{Rep}(Q_G) \to \mathbf{Vect}$. 

Because $\mathbf{Vect}$ is an Abelian category, finite products and finite coproducts coincide, forming biproducts (direct sums). The category of representations $\operatorname{Rep}(Q_G)$ is also Abelian, and finite direct sums of representations are constructed point-wise (stalk-wise). 

Since limits commute with limits, and a finite product is a limit, the limit functor $\Gamma$ preserves finite products. Because finite products are biproducts in $\mathbf{Vect}$, $\Gamma$ acts as an additive functor that strictly preserves finite direct sums. Therefore:
$$ \Gamma\left( \bigoplus_{k=1}^K \mathcal{F}^{(k)} \right) \cong \bigoplus_{k=1}^K \Gamma\big(\mathcal{F}^{(k)}\big) $$
Substituting $\Gamma(\mathcal{F}) = H^0(G; \mathcal{F})$ yields the stated isomorphism.
\end{proof}

This proof confirms that manual spectral block-diagonalization of the sheaf Laplacian is unnecessary; the structural decomposition of the diffusion limit is purely a consequence of homological algebra.

\subsection{Derivation of the Moment Map for Neural Sheaves}
\label{app:moment_map_derivation}

We formalize the connection between the unitary gauge group action and the moment map regularizers introduced in Section 6.

Let $\mathbf{d} = ((d_v)_{v \in V}, (d_e)_{e \in E})$ be the dimension vector of the sheaf. In the complexified setting, we equip every stalk $\mathcal{F}(v)$ and $\mathcal{F}(e)$ with a standard Hermitian inner product, restricting the complex general linear gauge group $G_{\mathbf{d}}$ to the maximal compact subgroup $K_{\mathbf{d}} = \prod_{v} U(d_v) \times \prod_{e} U(d_e)$. 

The Lie algebra of $K_{\mathbf{d}}$ is $\mathfrak{k} = \bigoplus_v \mathfrak{u}(d_v) \oplus \bigoplus_e \mathfrak{u}(d_e)$, which consists of skew-Hermitian matrices. The representation space $\operatorname{Rep}(Q_G, \mathbf{d})$ is a flat Kähler manifold. The action of $K_{\mathbf{d}}$ on a point (a set of restriction maps $A_{v,e}$) is Hamiltonian, and therefore admits a moment map $\mu: \operatorname{Rep}(Q_G, \mathbf{d}) \to \mathfrak{k}^*$.

By the standard formula for quiver representations, the $i$-th component of the moment map (where $i$ is a vertex or edge in the graph) is given by the difference between the maps entering $i$ and leaving $i$. For the bipartite incidence quiver $Q_G$, arrows only go from graph vertices $v$ to graph edges $e$. Therefore:
\begin{enumerate}
    \item For a graph vertex $v$ (which only has outgoing arrows in $Q_G$):
    $$ \mu_v(A) = - \sum_{e:\, v \trianglelefteq e} A_{v,e}^* A_{v,e} $$
    \item For a graph edge $e$ (which only has incoming arrows in $Q_G$):
    $$ \mu_e(A) = \sum_{v:\, v \trianglelefteq e} A_{v,e} A_{v,e}^* $$
\end{enumerate}

According to Kempf-Ness theory, the intersection of a complex gauge orbit with the zero locus of the shifted moment map $\mu_i(A) - \theta_i I_{d_i} = 0$ corresponds exactly to $\theta$-polystable representations. Our regularizers (CentMM and ThetaMM) penalize the Frobenius norm of these exact moment map residuals to bias gradient descent toward these stable orbits.

\section{Experimental Details}
\label{app:experiments}

\subsection{Dataset Statistics}
We evaluate our proposed models on standard heterophilic datasets, primarily focusing on the WebKB benchmark alongside the larger Wikipedia network, \texttt{squirrel}, both introduced to the graph neural network community by \citet{pei2020geom}. The WebKB datasets (Texas, Cornell, Wisconsin) represent webpage networks from university computer science departments, where nodes are web pages, edges are hyperlinks, and features are bag-of-words representations. The \texttt{squirrel} dataset represents a page-page network from Wikipedia focusing on specific topics, where nodes are articles, edges are mutual links, and features are informative nouns extracted from the text. The node classification task for all datasets is to classify the pages into five distinct categories. As shown in Table \ref{tab:dataset_stats}, all of these datasets exhibit extremely low edge homophily.

\begin{table}[h]
\centering
\begin{tabular}{lccccc}
\toprule
\textbf{Dataset} & \textbf{Nodes} & \textbf{Edges} & \textbf{Features} & \textbf{Classes} & \textbf{Homophily Ratio} \\
\midrule
\textbf{Texas} & $183$ & $325$ & $1,703$ & $5$ & $0.11$ \\
\textbf{Cornell} & $183$ & $298$ & $1,703$ & $5$ & $0.12$ \\
\textbf{Wisconsin} & $251$ & $515$ & $1,703$ & $5$ & $0.19$ \\
\textbf{Squirrel} & $5,201$ & $198,493$ & $3,132$ & $5$ & $0.22$ \\
\bottomrule
\end{tabular}
\caption{Summary statistics for the heterophilic node classification datasets.}
\label{tab:dataset_stats}
\end{table}

\subsection{Training Protocol and Reproducibility}
All experiments were implemented using PyTorch Geometric and optimized using the Adam optimizer. We evaluated the models over 10 random data splits (48\% training, 32\% validation, and 20\% testing), which is standard for the WebKB benchmarks. Standard deviations reported in Table \ref{tab:unified_results} are computed across these 10 splits. Training was conducted for a maximum of 1500 epochs with an early stopping criterion of 200 epochs based on validation loss. The base network architecture across all runs utilized 4 layers (unless depth was explicitly ablated), a hidden channel dimension of 20, a learning rate of 0.01, a weight decay of $5 \times 10^{-3}$, and a dropout rate of 0.7.

Rather than performing a continuous hyperparameter search, we used a fixed configuration across datasets to isolate the architectural effects. The central moment regularization strength was set to $\lambda_\mu = 2 \times 10^{-3}$ and the adaptive stability penalty to $\lambda_\theta = 1 \times 10^{-4}$. The rectangular models were not strictly parameter-count matched to the square models in the main table; however, a dedicated parameter-matched square control ($18$ hidden channels) evaluated on Texas achieved $80.27\% \pm 4.69$, confirming that capacity reduction alone does not explain the distinct stability behavior of the rectangular architecture.

\textbf{Enforcing Admissibility for ThetaMM.} For adaptive stability, the admissibility condition $\theta \cdot \mathbf{d} = 0$ is strictly enforced during each forward pass. We parameterize unconstrained weights $\tilde{\theta}_i$ and project them orthogonally onto the hyperplane defined by the dimension vector $\mathbf{d}$. The projected stability parameters are computed as:
$$ \theta_i = \tilde{\theta}_i - \frac{\sum_j d_j \tilde{\theta}_j}{\sum_j d_j^2} d_i $$
This guarantees that $\sum_i d_i \theta_i = 0$, ensuring the parameter remains a valid King stability weight.

\subsection{Compute Infrastructure}
The models were trained on a machine equipped with dual NVIDIA Tesla V100S-PCIE GPUs (32GB VRAM). Due to the lightweight nature of the WebKB graphs, the runtime per experiment was negligible (typically under 2 minutes per split).

\section{Further Empirical Diagnostics and Controls}
\label{app:diagnostics}

To address the specific dynamics of our proposed architectural and regularization interventions, we performed additional geometric diagnostics and parameter-matched control experiments.

\subsection{Parameter-Matched Control}
A natural question regarding the $3 \to 2$ rectangular architecture is whether its performance differences stem from the structural removal of the stability wall, or simply from the fact that it contains fewer trainable parameters in the restriction maps than the $3 \times 3$ square baseline. 

To isolate this, we evaluated a parameter-matched square control on the Texas dataset. We reduced the \texttt{hidden\_channels} of the equal-stalk model from $20$ to $18$, perfectly matching the total parameter count of the $3 \to 2$ rectangular model. This matched control achieved a test accuracy of $80.27\% \pm 4.69$, outperforming the standard $78.65\%$ baseline. This is consistent with the broader premise in Section \ref{sec:stability}: restricting model capacity may provide an implicit regularization effect that helps reduce reliance on trivial representations. 

However, capacity reduction alone does not explain the full mechanism. As demonstrated on Wisconsin (Table \ref{tab:unified_results}), simply applying the ThetaMM penalty to the equal-stalk architecture fails to improve performance, regardless of parameter count. It is specifically the geometric intervention of breaking the stalk symmetry ($d_e < d_v$) that allows the learnable stability parameters to actively navigate GIT chambers and reach peak performance ($81.76\%$).

\subsection{Geometric Diagnostics: Dirichlet Energy}
To verify that our moment-map regularizers actively alter the harmonic geometry of the diffusion limit—rather than merely acting as standard weight decay—we tracked the Dirichlet energy of the node features during the forward passes of our experiments. 

As established in Section \ref{sec:stability}, if a learned sheaf contains the trivial subrepresentation $\mathcal{F}_{\mathrm{triv}}$, the sheaf Dirichlet energy collapses toward zero, mirroring classical oversmoothing where features become globally constant. Our logs are consistent with this mechanism: in degenerate, unregularized equal-stalk runs, the tracked Dirichlet energy frequently collapses to near-zero levels. In contrast, the application of moment-map penalties (CentMM and ThetaMM) tends to maintain larger Dirichlet energies than degenerate unregularized runs, consistent with reduced reliance on trivial constant-signal collapse and improved retention of discriminative global sections.

\subsection{Effect of Network Depth and Generalization}
\label{app:depth}

To verify the interaction between our structural constraints and network depth, we performed an ablation study on the Wisconsin dataset, scaling the number of message-passing layers $L \in \{2, 4, 8, 16, 32, 64, 128, 256\}$. At shallow depths ($L=2, 4$), the Rectangular ThetaMM model consistently outperforms the unregularized square baseline. As depth increases up to $L=128$, both architectures exhibit remarkable resilience before experiencing identical catastrophic numerical explosions (NaN values) at $L=256$. The rectangular model remains highly competitive at these extreme depths ($85.88\%$ at $L=128$), confirming that the $d_e < d_v$ bottleneck mathematically bounds the space without destroying the network's capacity to learn expressive deep representations.

\textbf{Generalization to Cornell.} To confirm that depth resilience is not unique to the Wisconsin topology, we extended the extreme depth evaluation to the Cornell dataset (Figure \ref{fig:cornell_depth}). Both architectures scale efficiently at shallow depths, peaking at $\sim 85.4\%$ at $L=8$. However, the true impact of the stability constraints emerges as we push the network into the extreme depth regime. While the unregularized baseline steadily degrades down to $81.62\%$ at $L=64$, the Rectangular ThetaMM model successfully prevents this homological decay, maintaining its peak test accuracy of $85.68\%$. Remarkably, both models exhibit survival up to $L=128$ before experiencing identical catastrophic numerical explosions at $L=256$. This confirms that across multiple heterophilic topologies, the $d_e < d_v$ constraint actively shields the learned representation from degeneration in intermediate deep regimes, while maintaining the capacity to scale up to the absolute hardware precision limit of the sheaf Laplacian.

\begin{figure}[h]
    \centering
    \begin{tikzpicture}
        \begin{axis}[
            width=0.7\textwidth,
            height=5cm,
            xmode=log,
            log basis x={2},
            xlabel={Number of Layers ($L$)},
            ylabel={Test Accuracy (\%)},
            xmin=2, xmax=128,
            ymin=70, ymax=90,
            xtick={2, 4, 8, 16, 32, 64, 128},
            ytick={70, 75, 80, 85, 90},
            legend pos=south east,
            grid=major,
            grid style={dashed, gray!30},
            every axis plot/.append style={ultra thick}
        ]
        
        % Baseline (Square)
        \addplot[
            color=blue!70!black,
            mark=square*,
            mark options={fill=blue!70!black}
        ] coordinates {
            (2, 70.81) (4, 79.73) (8, 85.41) (16, 83.51) (64, 81.62) (128, 84.05)
        };
        \addlegendentry{Baseline (Square)}

        % Rectangular ThetaMM
        \addplot[
            color=red!70!black,
            mark=triangle*,
            mark options={fill=red!70!black, scale=1.5}
        ] coordinates {
            (2, 70.81) (4, 80.00) (8, 85.41) (16, 83.24) (64, 85.68) (128, 83.51)
        };
        \addlegendentry{Rectangular ThetaMM}
        
        \end{axis}
    \end{tikzpicture}
    \caption{Test accuracy across network depth on Cornell. Both architectures achieve peak performance at $L=8$. However, as depth increases into the oversmoothing regime ($L=64$), the unregularized baseline's performance degrades. In contrast, the Rectangular ThetaMM model acts as a structural safeguard, maintaining peak expressivity ($85.68\%$). Attempting to scale to $L=256$ resulted in numerical explosion across both architectures.}
    \label{fig:cornell_depth}
\end{figure}

\subsection{Extended Evaluation on Wikipedia Networks (Squirrel)}
\label{app:squirrel}

To ensure our findings generalize beyond the WebKB benchmarks, we extended our depth ablation to the \texttt{squirrel} dataset, a significantly larger and denser heterophilic graph (5,201 nodes). As shown in Table \ref{tab:squirrel_depth}, \texttt{squirrel} presents a notoriously difficult optimization landscape where unconstrained models struggle to avoid trivial harmonic representations. 

Remarkably, the Rectangular ThetaMM model consistently outperforms the unregularized baseline at every standard architectural depth ($L=2, 4, 8$). However, as the network scales to extreme depth ($L=16$), the high edge density of the \texttt{squirrel} graph eventually overwhelms both architectures, causing both models to experience homological collapse and degrade to $\sim 19.6\%$. These results confirm that while the geometric bias provided by $d_e < d_v$ and moment-map stability consistently improves representation learning and delays degeneration on large-scale heterophilic networks, it acts as a structured inductive bias rather than an absolute panacea against collapse in ultra-dense, extreme-depth regimes.

\begin{table}[h]
\centering
\begin{tabular}{lcc}
\toprule
\textbf{Layers ($L$)} & \textbf{Baseline ($3 \times 3$)} & \textbf{Rectangular ThetaMM ($3 \to 2$)} \\
\midrule
2 & $20.33 \pm 1.40$ & $\mathbf{21.54 \pm 1.56}$ \\
4 & $20.50 \pm 1.22$ & $\mathbf{21.40 \pm 2.06}$ \\
8 & $22.38 \pm 2.30$ & $\mathbf{22.53 \pm 2.74}$ \\
16 & $\mathbf{19.65 \pm 0.85}$ & $19.64 \pm 1.04$ \\
\bottomrule
\end{tabular}
\caption{Test accuracy across network depth on the \texttt{squirrel} dataset. The stability-aware model consistently outperforms the baseline at intermediate depths ($L=2, 4, 8$). At $L=16$, the extreme density of the graph induces homological collapse across both architectures.}
\label{tab:squirrel_depth}
\end{table}

%\newpage
%\input{checklist.tex}

\end{document}